\documentclass[twoside]{article}

\usepackage[accepted]{aistats2025}
\usepackage{comment}
\usepackage{times}
\usepackage{soul}
\usepackage{url}
\usepackage[hidelinks]{hyperref}
\usepackage[utf8]{inputenc}
\usepackage{graphicx}
\usepackage{amsmath,amsfonts}
\usepackage{amsthm}
\usepackage{booktabs}
\usepackage{algorithm}
\usepackage{algorithmic}
\usepackage{cancel}
\usepackage{xcolor} 
\usepackage{natbib}

%

\newtheorem{theorem}{Theorem}

\newtheorem{proposition}[theorem]{Proposition}
%




\begin{document}

%

%

\twocolumn[

\aistatstitle{Bayesian Gaussian Process ODEs via Double Normalizing Flows}

\aistatsauthor{ Jian Xu \And Shian Du \And  Junmei Yang }
\aistatsaddress{ South China University\\ of Technology \And Tsinghua University \And South China University\\ of Technology} 

\aistatsauthor{ Xinghao Ding \And  Delu Zeng$^*$ \And John Paisley }
\aistatsaddress{Xiamen University\And South China University\\ of Technology \And Columbia University } ]

\runningauthor{Jian Xu, ~ Shian Du, ~  Junmei Yang, ~ Xinghao Ding, ~ Delu Zeng, ~ John Paisley}

\begin{abstract}
  Gaussian processes have been used to model the vector field of continuous dynamical systems, which are characterized by a probabilistic ordinary differential equation (GP-ODE). Bayesian inference for these models has been extensively studied and applied in tasks such as time series prediction. However, the use of standard GPs with basic kernels like squared exponential  kernels has been common in GP-ODE research, limiting the model's ability to represent complex scenarios. To address this limitation, we introduce normalizing flows to reparameterize the ODE vector field, resulting in a data-driven prior distribution, thereby increasing flexibility and expressive power. We develop a variational inference algorithm that utilizes analytically tractable probability density functions of normalizing flows. Additionally, we also apply normalizing flows to the posterior inference of GP-ODEs to resolve the issue of strong mean-field assumptions. By applying normalizing flows in these ways, our model improves accuracy and uncertainty estimates for Bayesian GP-ODEs. We validate the effectiveness of our approach on simulated dynamical systems and real-world human motion data, including time series prediction and missing data recovery tasks.
\end{abstract}

\section{INTRODUCTION}
Recent research has leveraged machine learning to model the behavior of dynamical systems governed by continuous-time differential equations 
\citep{hegde2022variational,chen2018neural,zhu2022numerical,chen2022forecasting,zhang2022metanode,heinonen2018learning}. Mathematically, such problems can be written as 
\begin{align}
\frac{d\boldsymbol{x}_t}{dt} &= \boldsymbol{f}(\boldsymbol{x}_t),  
\end{align}
where $\boldsymbol{x}_t \in \mathbb{R}^d$ represents a latent state, and $\boldsymbol{f}$ represents a vector field. One of the most promising approaches is the use of Gaussian process (GP) vector fields  to learn unknown nonlinear differential functions from state observations 
  through Bayesian inference, referred to as GP-ODEs \citep{hegde2022variational,heinonen2018learning}. In these dynamical system models, the vector fields are given Gaussian process priors for learning their unknown parameters.

Accuracy and uncertainty are crucial metrics in time series and dynamical systems. The  limitations of previous GP-ODE methods mainly falls into two areas. First, the learning of previous GP-ODEs has been constrained by the modeling capacity. By assuming the vector field $\boldsymbol{f}$ to be a standard Gaussian process with basic kernels, flexibility in modeling real world complex systems may be restricted. Second, inference for previously proposed GP-ODEs used a mean-field Gaussian posterior approximation for computational tractability, which may be too restrictive for complex dynamical systems. 

To address these limitations, we propose normalizing flows to reparameterize the vector field of ODEs, resulting in a more flexible and data-driven prior distribution. Normalizing flows have been proposed to transform a simple random variable into a more complex one by applying
a sequence of invertible and differentiable transformations \citep{kobyzev2020normalizing,papamakarios2021normalizing,zhang2021diffusion,maronas2021transforming}. For the complex system dynamics that a standard GP equipped with elementary kernel cannot model well, transformed GP priors are promising alternatives. Specifically, by applying the normalizing flow idea to transform the standard GP,  $\boldsymbol{f}$, we can get a more flexible and expressive random process as model definition. They simultaneously can be applied to posterior inference for GP-ODEs to learn non-Gaussian distributions during inference. Through these two applications of normalizing flows, we can improve accuracy and uncertainty estimates for Bayesian GP-ODEs. 

Specifically, in this paper we leverage the benefits of the analytically tractable probability density functions of normalizing flows to construct a variational inference algorithm that allows for simultaneous, precise, and reliable learning of the unknown continuous dynamics. The effectiveness of our approach is demonstrated on simulated dynamical systems and real-world human motion data, including tasks such as time series prediction and missing data recovery. The experimental results demonstrate that normalizing flows produce improved accuracy and uncertainty estimation over previous approaches. 


\section{METHOD}
In this section, we propose a Bayesian model based on normalizing flows to learn vector fields of ODEs with Gaussian processes. We review the background preliminaries before presenting our model and algorithm. We discuss related work in some detail in Section \ref{sec.related}.
\subsection{Gaussian process ODEs}
\label{Gaussian Process ODEs}
Gaussian Process ODEs are a class of nonparametric probabilistic models of continuous dynamic systems that evolve based on vector fields. The fundamental idea of a GP-ODE is to represent the ODE's unknown vector field as a Gaussian process, which can represent the smoothness, volatility, and randomness of the stochastic function at different input points. A numerical ODE solver can then be used to compute the evolution of the stochastic function over time.

The formulation of a GP-ODE model involves specifying a prior distribution over the latent function $\boldsymbol{f}$ and a likelihood function that establishes the relationship between the observed data and the latent function. More precisely, the model is defined by a zero-mean multidimensional Gaussian process prior \citep{moreno2018heterogeneous} over a dynamic process indexed by $t$ as follows
\begin{eqnarray}
\frac{d\boldsymbol{x}_t}{dt} &=& \boldsymbol{f}(\boldsymbol{x}_t),\nonumber \\
\boldsymbol{f}(\boldsymbol{x}_t) &\sim& \mathcal{GP}(\boldsymbol{0}, K(\boldsymbol{x}_t, \boldsymbol{x}_t')), \\
\boldsymbol{y}_t &\sim& \mathcal{N}(\boldsymbol{x}_t, R).\nonumber
\end{eqnarray}
Here, $\boldsymbol{x}_t \in \mathbb{R}^d$ represents the latent process, while $\boldsymbol{y}_t \in \mathbb{R}^d$ denotes the observation contaminated by noise with covariance $R$. The multidimensional function $\boldsymbol{f}(\boldsymbol{x}_t) \in \mathbb{R}^d$ represents the velocity field that characterizes the dynamics of the latent process over time. The GP prior on this function is defined by the kernel $K(\boldsymbol{x}_t, \boldsymbol{x}'_{t})$.

Inference for GP-ODE models entails computing the posterior distribution over the latent process $x_t$ and learn the function $\boldsymbol{f}$ given the observed data $\boldsymbol{y}_t$. This can be accomplished through numerical integration and variational inference. For instance, a popular approach by \cite{hegde2018deep} involves employing the Runge-Kutta method for numerical ODE solving to obtain an estimate of the latent function, followed by variational inference to approximate the posterior distribution.

One limitation of Gaussian process (GP) models is their sensitivity to the choice of kernel function, which can result in local underfitting when dealing with complex data structures. In general, even overly simple nonlinear models can introduce bias in parameter estimation and potential underfitting issues \citep{fortuin2022priors}. Furthermore, another limitation of the previous GP-ODE models is that the complexity of the data and the non-Gaussian distribution of the posterior often pose challenges for parameter inference. This is particularly challenging when using variational inference, as it can be difficult to find a suitable posterior distribution family. To address this challenge, in the next subsection we present our approach for enhancing the expressiveness of Gaussian Process ODE models using double normalizing flows.

\subsection{Double normalizing flows}
\subsubsection{ Flow \#1: Normalizing flow model prior}
 To increase the complexity of the GP-ODEs traditional prior distribution, we transform the Gaussian process prior through a normalizing flow, a family of flexible and invertible transformations that maps simple distributions to more complex ones \citep{rezende2015variational}.  
 
 To review, let $\boldsymbol{x}$ be a $d$-dimensional continuous random vector and $p(\boldsymbol{x})$ be its corresponding probability density distribution. Normalizing flows construct a desired and often multi-modal distribution by pushing $x$ through a series of transformations, $G_{K}(\boldsymbol{x})=\boldsymbol{g}_K\circ\cdots\boldsymbol{g}_2\circ \boldsymbol{g}_1 (\boldsymbol{x})$. By repeatedly applying change of variables rule,
the initial density moves through a sequence of invertible mappings, at the end of which we obtain a new distribution,
\begin{equation}
\ln p\left(G_{K}(\boldsymbol{x})\right)=\ln p\left(\boldsymbol{x}\right)-\sum_{k=1}^{K} \ln \left|\operatorname{det}\frac{\partial \boldsymbol{g}_{k}}{\partial G_{k-1}(\boldsymbol{x})}\right|    
\end{equation}
For example, a planar flow uses a simple single layer neural network structure for $\boldsymbol{g}_{k}$ defined as
 \begin{equation}
    \boldsymbol{g}(\boldsymbol{x})=\boldsymbol{x}+\boldsymbol{u} h\left(\mathbf{w}^{\top} \boldsymbol{x}+b\right),
 \end{equation}
where $\boldsymbol{u} \in \mathbb{R}^d, \mathbf{w} \in \mathbb{R}^d, b \in \mathbb{R}$ are free network parameters and $h(\cdot)$ is a non-linear smooth function \citep{papamakarios2021normalizing}. This transformation is invertible, and its Jacobian determinant can be computed efficiently as follows,
\begin{eqnarray}
\psi(\boldsymbol{x})&=&h^{\prime}\left(\mathbf{w}^{\top} \boldsymbol{x}+b\right) \mathbf{w},\nonumber \\
\operatorname{det}\left|\frac{\partial \boldsymbol{g}}{\partial \boldsymbol{x}}\right|&=&\left|\operatorname{det}\left(\mathbf{I}+\boldsymbol{u} \psi(\boldsymbol{x})^{\top}\right)\right|\\ 
&=&\left|1+\boldsymbol{u}^{\top} \psi(\boldsymbol{x})\right|.\nonumber
\end{eqnarray}

Applying this idea to a simple GP $\boldsymbol{f}(\cdot)$, we can transform it into a more flexible and expressive random process $G_K(\boldsymbol{f})(\cdot)$. Specifically, we form the vector field by first passing the state vector $\boldsymbol{x}_t$ through a multidimensional Gaussian process
\begin{equation}
\boldsymbol{f}(\boldsymbol{x}_t) \sim \mathcal{GP}(\boldsymbol{0}, K(\boldsymbol{x}_t,\boldsymbol{x}_t')).
\end{equation}
We then apply the series of transformations $G_{K}$ to the output of the Gaussian process,
\begin{equation}
 G_{K}(\boldsymbol{f}(\boldsymbol{x}_t)) = G_{K} \circ \boldsymbol{f}(\boldsymbol{x}_t) .
\end{equation}
The resulting function $G_{K} \circ \boldsymbol{f}$ can be used as the vector field in the ODE model since, by Kolmogorov’s consistency theorem \citep{oksendal2013stochastic}, it is a valid multivariate stochastic process in the same input space.
By compounding $K$ such transformations, we obtain a transformed prior distribution that is a highly flexible and expressive representation of the true vector field. 

As a result, we model the likelihood of the observed data $\boldsymbol{y}=\{\boldsymbol{y}_{t_1},\boldsymbol{y}_{t_2},\dots,\boldsymbol{y}_{t_N}\}$ as a noisy realization of the underlying hidden state $\boldsymbol{x}=\{\boldsymbol{x}_{t_1},\boldsymbol{x}_{t_2},\dots,\boldsymbol{x}_{t_N}\}$ through the function
\begin{equation}
\label{the model}
\boldsymbol{y}_t =\boldsymbol{x}_0+ \int_{0}^{t}  G_K\circ \boldsymbol{f}(\boldsymbol{x}_{\tau}) d\tau
 + \epsilon_t
\end{equation}
where $t_n \in [0,T]$, $G_K\circ \boldsymbol{f}$ is the vector field defined by the normalizing flow-transformed Gaussian process prior, and $\epsilon_t \sim \mathcal{N}(0,R)$ is additive Gaussian noise. We assume a $\mathcal{N}({0}, I)$ prior distribution over the initial variable $\boldsymbol{x}_0$.

In a traditional GP model, the log marginal likelihood is optimized in closed form. However, in the present framework, this quantity is intractable, and hence we resort to variational inference, which has advantages over maximum likelihood estimation \citep{blei2017variational,zhang2018advances}. By maximizing the variational objective, we can approximate the true posterior distribution over the latent variables. We discuss this inference algorithm next, again building our approximation on normalizing flows.

\subsubsection{Flow \#2: Normalizing flow posterior inference}
To learn the posterior distribution of $p(\boldsymbol{f}|\boldsymbol{y})$, we propose a variational inference method that again uses normalizing flows for distribution estimation. This learns a distribution $q(\boldsymbol{f})$ that approximates the true posterior distribution $p(\boldsymbol{f}|\boldsymbol{y})$ by minimizing the Kullback-Leibler (KL) divergence between $q(\boldsymbol{f})$ and $p(\boldsymbol{f}|\boldsymbol{y})$. 


To speed up the computation and relax the dependence on the entire data, we use the variational sparse GP (SVGP) \citep{titsias2009variational,hensman2015scalable,mao2020multiview,liu2020gaussian,xu2024sparse}.
SVGP introduces a set of inducing points $\boldsymbol{z}=\{\boldsymbol{z}_1,\boldsymbol{z}_2,...,\boldsymbol{z}_M\}$ and inducing variables $\boldsymbol{u}=\{\boldsymbol{u}_1,\boldsymbol{u}_2,...,\boldsymbol{u}_M\}$ with a prior distribution $p(\boldsymbol{u})=\mathcal{N}(0,K(\boldsymbol{z},\boldsymbol{z}'))$. The inducing variables are used to sparsely represent the Gaussian process vector field $\boldsymbol{f}$, which reduces the complexity of the Gaussian
process inference by introducing the approximation
\begin{eqnarray}\label{sparse inducing
points}
p(\boldsymbol{f}_t|\boldsymbol{u}) &\hspace{-2pt}=\hspace{-2pt}& \mathcal{N} (\mu_{f_t},\Sigma_{f_t})\\
\mu_{f_t} & \hspace{-2pt}=\hspace{-2pt} & K(\boldsymbol{x}_t,\boldsymbol{z})K^{-1}(\boldsymbol{z},\boldsymbol{z}')\boldsymbol{u}\nonumber\\
\Sigma_{f_t} & \hspace{-2pt}=\hspace{-2pt} & K(\boldsymbol{x}_t,\boldsymbol{x}_t')-K(\boldsymbol{x}_t,\boldsymbol{z})K^{-1}(\boldsymbol{z},\boldsymbol{z}')K(\boldsymbol{z},\boldsymbol{x}_t)\nonumber
\end{eqnarray}
 where covariance matrix $K(\boldsymbol{z},\boldsymbol{z}')$ is a kernel function calculated between the inducing points $\boldsymbol{z}$. Like previous work with this approximation \citep{titsias2009variational,hensman2015scalable}, we factor the joint approximate posterior distribution for the Gaussian process as
 \begin{equation}
q(\boldsymbol{f},\boldsymbol{u})=q(\boldsymbol{f}|\boldsymbol{u})q(\boldsymbol{u})=p(\boldsymbol{f}|\boldsymbol{u})q(\boldsymbol{u}).
 \end{equation}
We consider the ODE model proposed in Equation (\ref{the model}) augmented by sparse inducing
points presented in Equation (\ref{sparse inducing points}). In Bayesian machine learning, the model evidence function $p(\boldsymbol{y})$ is used for inference by establishing an upper bound on the variational objection function, where the difference is the KL divergence between an approximating distribution and the true model posterior. For our model, this is
\begin{equation}
\label{log likelihood}
    \log p(\boldsymbol{y})\ge \mathcal{L}=\mathbb{E} _{q(\boldsymbol{x}_0,\boldsymbol{f},\boldsymbol{u})}\left[ \log \frac{p(\boldsymbol{f},\boldsymbol{u},\boldsymbol{x}_0,\boldsymbol{y})}{q(\boldsymbol{x}_0,\boldsymbol{f},\boldsymbol{u})} \right] 
\end{equation}
where $q( \boldsymbol{u} ), q( \boldsymbol{x}_0)$ are the variational posteriors of $\boldsymbol{u}$ and $\boldsymbol{x}_0$, and $\lambda$ are other model parameters such as Gaussian kernel length scale. We use this ELBO function as the objective to optimize the variational distribution $q(\boldsymbol{u})$. A standard calculation shows that Equation (\ref{log likelihood}) can be mathematically simplified to
\begin{equation}
\label{elbo}
\begin{aligned}
\mathcal{L} =& ~~ {{\mathbb{E} _{q(\boldsymbol{f})q(\boldsymbol{x}_0)}\big[\log p(\boldsymbol{y}|G_K\circ \boldsymbol{f},\boldsymbol{x}_0)\big]}}
\\
\\&-{{\mathrm{KL}\big[q(\boldsymbol{u})||p(\boldsymbol{u})\big]}}-{{\mathrm{KL}\big[q(\boldsymbol{x}_0)||p(\boldsymbol{x}_0)\big]}},
\end{aligned}
\end{equation}
where $\mathbb{E}_{q(\boldsymbol{f})q(\boldsymbol{x}_0)}[\log p(\boldsymbol{y}|G_K\circ\boldsymbol{f},\boldsymbol{x}_0]$ is the expected log-likelihood of the data under the variational distributions $q(\boldsymbol{f})$ and $q(\boldsymbol{x}_0)$. In the context of inducing points, we represent $q(\boldsymbol{f})=\int{q\left( \boldsymbol{f},\boldsymbol{u} \right) d\boldsymbol{u}}
$ with samples from the variational posterior using Matheron's Rule \citep{wilson2020efficiently}. 
The likelihood function can be written as
\begin{equation}
\label{p(y|u)}
p(\boldsymbol{y}|G_K\circ\boldsymbol{f},\boldsymbol{x}_0)=\mathcal{N}\left(\boldsymbol{x}_0+\textstyle\int_{0}^{t} G_K\circ \boldsymbol{f}(\boldsymbol{x}_{\tau}) d\tau,R\right).
\end{equation}
We optimize the ELBO using stochastic gradient descent, where the gradients are computed using Monte Carlo sampling. The integral in Equation (\ref{p(y|u)}) can be solved using an ODE solver. 

It remains to define the $q(\boldsymbol{u})$ distribution to be used for the sampled gradients. As discussed in Section \ref{Gaussian Process ODEs}, in order to obtain a more accurate estimation of the model's uncertainty, we parameterize $q(\boldsymbol{u})$ used in Equation (\ref{elbo}) with another  normalizing flow. For example, a planar flow gives
\begin{align}
\label{U posterior}
 \begin{aligned} 
\boldsymbol{u}&=\phi \left( \boldsymbol{v} \right)\\ 
q(\boldsymbol{u})&=\pi (\boldsymbol{v}) \Big| \det \left( \frac{d\phi}{d\boldsymbol{v}} \right) \Big|^{-1}\\
&=\pi (\boldsymbol{v})\left|1+\boldsymbol{u}^{\top} \psi(\boldsymbol{v})\right|^{-1}
\end{aligned}   
\end{align}
where $\boldsymbol{v}$ is a new random
variable with a standard normal distribution $\pi(\cdot)=\mathcal{N}(\boldsymbol{0},\boldsymbol{I})$. Then using Equation (\ref{U posterior}), the KL divergence in Equation (\ref{elbo}) can be re-written as
\begin{align}
\label{KL term}
    \begin{aligned}
 &\mathrm{KL(}q(\boldsymbol{u})\| p(\boldsymbol{u})) =~~~~
 \\
 &~~~~~~\mathbb{E} _{\pi (\boldsymbol{v})}[ \log \pi (\boldsymbol{v})-\log \Big| \det \left( \frac{d\phi}{d\boldsymbol{v}} \right) \Big|-\log p(\phi (\boldsymbol{v}))]\\
 &~~~~~~=-\mathbb{E} _{\pi (\boldsymbol{v})}\log \left| 1+\boldsymbol{u}^{\top}\psi (\boldsymbol{v}) \right|+\frac{1}{2}\log |K\left( \boldsymbol{z},\boldsymbol{z}' \right) |
 \\& ~~~~~~~~~~~~+\frac{1}{2}\mathbb{E} _{\pi (\boldsymbol{v})}\boldsymbol{v}^TK^{-1}\left( \boldsymbol{z},\boldsymbol{z}' \right) \boldsymbol{v}+\mathrm{const.}
\end{aligned}
\end{align}
We can still utilize Monte Carlo sampling to obtain an unbiased estimate of the KL term from the Equation (\ref{KL term}). By combining Equations (\ref{elbo}) and (\ref{KL term}), we construct the complete objective function for double normalizing flows.

\section{RELATED WORKS}\label{sec.related}
Before presenting experimental results, this section provides an overview of the existing literature on  ODEs and normalizing flows, as well as their relationships. 
\paragraph{Non-parametric ODE models.}
Previous research has explored combining GP priors with ODEs for model specification \citep{heinonen2018learning,hegde2022variational}. These approaches offer flexible and non-parametric ways to model complex dynamics in various applications. Specifically, \citet{heinonen2018learning} proposes learning unknown, non-linear differential functions from state observations using Gaussian process vector fields, with the assumption that the vector field is a random function drawn from a Gaussian process. They provide a method to parameterize the ODE model with inducing points and use adjoint sensitivity equations to efficiently compute gradients of the system. Building on this, \citet{hegde2022variational} proposes a Bayesian perspective for posterior inference of the model and a probabilistic shooting augmentation to enable efficient inference for long trajectories. 
\paragraph{Normalizing flows and  dynamic modeling.}
Normalizing flows, as a class of generative models, have been shown to be flexible and expressive, allowing for complex and high-dimensional data modeling. 
Several studies have proposed using normalizing flows to enhance dynamic modeling. For example, \citet{deng2020modeling,deng2021continuous} proposed a type of normalizing flow driven by a differential deformation of the Wiener process, which inherits many appealing properties of its base process. \citet{zhi2022learning} proposed learning an ODE of interest from data by viewing its dynamics as a vector field related to another base vector field by an invertible neural network, from a perspective of differential geometry.

\paragraph{Normalizing flows and  GP models.} \citet{maronas2021transforming}  introduced the Transformed Gaussian Processes (TGP) framework, which combines Gaussian Processes (GPs) as flexible and non-parametric function priors with a parametric invertible transformation. The primary objective is to expand the range of priors and incorporate interpretable prior knowledge, including constraints on boundedness. In a subsequent work, \citet{maronas2023efficient} applied the TGP framework to create non-stationary stochastic processes that are inherently dependent. The TGP approach demonstrates particular suitability for addressing multi-class problems involving a substantial number of classes. 
While our paper also incorporates the idea of TGPs, our approach differs in the sense that our baseline is a continuous dynamical system. Moreover, we consider the complexity of the posterior distribution and propose a novel approach called double normalizing flows, which further improves the model's uncertainty estimation in time series prediction.


\begin{table}[b]
    \centering
          {\begin{tabular}{c|c}
        
        \hline\hline
        Method & Paper   \\
        \hline
        Bayesian NeuralODE & \citet{dandekar2020bayesian}  \\
        NeuralODE & \citet{chen2018neural}  \\
        npODE & \citet{heinonen2018learning}  \\
        GP-ODE & \citet{hegde2022variational} \\
        ODE2VAE & \citet{yildiz2019ode2vae} \\
        Latent SDE & \citet{solin2021scalable} \\
        \hline\hline
      \end{tabular}
      }
    
    \caption{A list of the models compared with.}
    \label{algorithmlist}
    
\end{table}

\section{EXPERIMENTS}
In this section, we empirically evaluate our approach compared with state-of-the-art methods (see Table \ref{algorithmlist}) including Bayesian NeuralODE \citep{dandekar2020bayesian}, NeuralODE \citep{chen2018neural}, npODE \citep{heinonen2018learning}, GP-ODE \citep{hegde2022variational}, ODE2VAE \citep{yildiz2019ode2vae} and Latent SDE \citep{solin2021scalable} in scenarios that encompass various challenges, including time series prediction, missing observations, and complex long trajectories.

\begin{figure*}[th!] 
  \centering       
    \includegraphics[trim={10mm 0mm 0mm 0mm},clip,width=.98\textwidth]{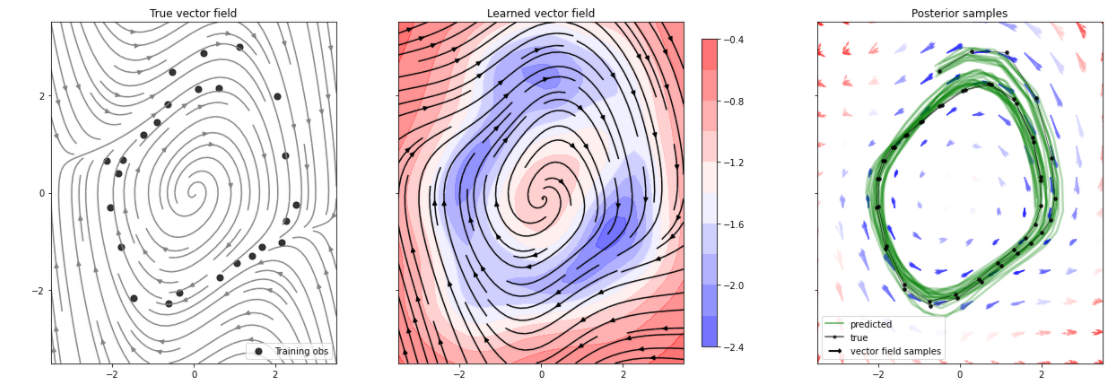} 
  \caption{Left-to-right: The true vector field, the vector field learned for this VDP dynamical system, the posterior trajectories sampled from our GP-DNF method. The log variance gradually decreases from red to blue.} 
  \label{fig1}  
\end{figure*}

\subsection{Predicting simulated dynamical systems}
We initially demonstrate our proposed method by incorporating double normalizing flows  (GP-DNFs) into the vector field of a two-dimensional Van der Pol (VDP) system \citep{kanamaru2007van}. This system is defined by the differential equations,
\begin{eqnarray}\label{eq.vdp}
\dot{x}_{1}&=&x_{2}, \\
\dot{x}_{2}&=&-x_{1}+0.5 x_{2}\left(1-x_{1}^{2}\right) .\nonumber
\end{eqnarray}

We generated a trajectories of length 50 by simulating the true system dynamics from the initial state $(\boldsymbol{x}_1(0), \boldsymbol{x}_2(0))=\left(-1.5,2.5\right)$, and added Gaussian noise with $\sigma^2=0.05$ to create the training data. We then explored two scenarios with different training time intervals, $t \in \left[0, 7\right]$ and forecasting intervals $t \in \left[7, 14 \right]$ respectively, using a regularly sampled time grid (Task 1) and an irregular grid (Task 2) with uniform random sampling of time points. We calculate mean squared error (MSE) and mean negative log likelihood (MNLL) as performance metrics.

\begin{figure*}[t!] 
  \centering       
    \includegraphics[width=.98\textwidth]{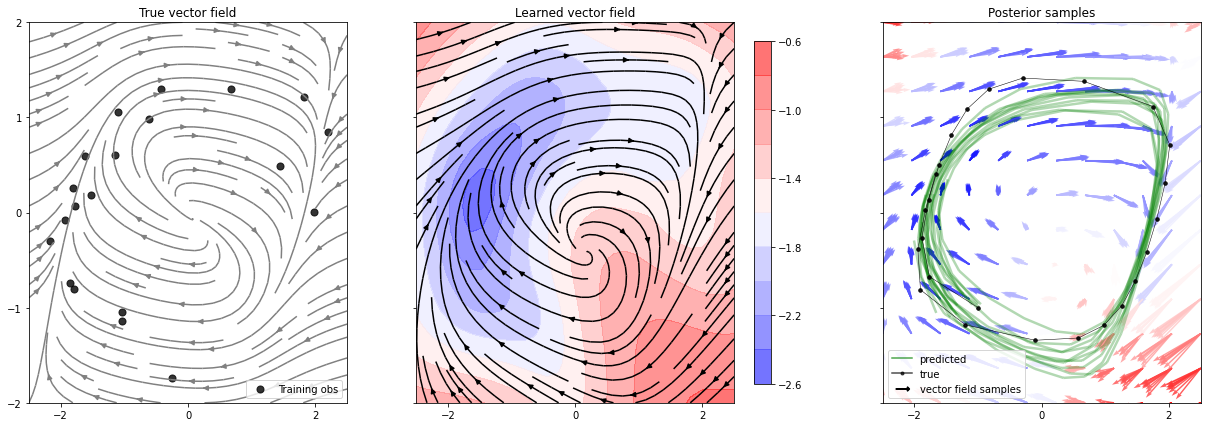} 
  \caption{Left-to-right: The true vector field, the vector field learned for this FHN dynamical system, the posterior trajectories sampled from our GP-DNF method. The log variance gradually decreases from red to blue.} 
  \label{fig2}  
\end{figure*}

\begin{table}[t]

\centering

\small
\resizebox{\columnwidth}{!}{
\begin{tabular}{|l|c|c|}
\hline
Method & Task 1: MNLL  & Task 1: MSE   \\
\hline
Bayesian NeuralODE  & $0.82 \pm 0.01$ & $1.45 \pm 0.04$ \\
\hline
NeuralODE & --- & $0.29 \pm 0.11$ \\
\hline
npODE & $1.47 \pm 0.59$ & $0.16 \pm 0.05$  \\
\hline
GP-ODE & $0.60 \pm 0.03$ & $0.13 \pm 0.01$ \\
\hline
ODE2VAE & $0.55 \pm 0.03$ & $0.13 \pm 0.01$  \\
\hline
Latent SDE & $0.42 \pm 0.03$ & $0.10 \pm 0.01$  \\
\hline

\textbf{GP-DNF} (ours) & $\mathbf{0.12 \pm 0.01}$ & $\mathbf{0.03 \pm 0.01}$ \\
\hline
\end{tabular}
}

\vspace{5pt}

\resizebox{\columnwidth}{!}{
\begin{tabular}{|l|c|c|}
\hline
Method  & Task 2: MNLL & Task 2: MSE  \\
\hline
Bayesian NeuralODE  &  $0.88 \pm 0.01$ & $1.68 \pm 0.04$ \\
\hline
NeuralODE &  --- & $0.55 \pm 0.07$ \\
\hline
npODE &  $8.89 \pm 3.06$ & $2.08 \pm 0.78$ \\
\hline
GP-ODE &  $0.41 \pm 0.18$ & $0.21 \pm 0.07$ \\
\hline
ODE2VAE &  $0.37 \pm 0.14$ & $0.19 \pm 0.05$ \\
\hline
Latent SDE &  $0.30 \pm 0.18$ & $0.15 \pm 0.05$ \\
\hline

\textbf{GP-DNF} (ours) &  $\mathbf{0.21 \pm 0.06}$ & $\mathbf{0.04 \pm 0.01}$ \\
\hline
\end{tabular}
}
\caption{A comparison of mean negative log likelihood (MNLL) and mean squared error (MSE) performance results for regular (Task 1) and irregular (Task 2) time grids in the VDP dynamic modeling problem of Equation \ref{eq.vdp}.}\label{table1}
\end{table}




\begin{table}[t]
    \centering
          \resizebox{1\width}{!}
          {\begin{tabular}{|l|c|c|}
        
        \hline
        Method  & MNLL & MSE  \\
        \hline
        Bayesian NeuralODE  & $0.77 \pm 0.12$ & $0.24 \pm 0.03$ \\
        \hline
        NeuralODE & --- & $0.18 \pm 0.00$ \\
        \hline
        npODE  & $6.49 \pm 1.49$ & $0.08 \pm 0.01$ \\
        \hline
        GP-ODE& $0.09 \pm 0.05$ & $0.07 \pm 0.02$ \\
        \hline
        ODE2VAE & $0.09 \pm 0.04$ & $0.07 \pm 0.02$ \\
        \hline
        Latent SDE & $0.07 \pm 0.03$ & $0.05 \pm 0.02$ \\
        \hline
        \textbf{GP-DNF} (ours) & $\mathbf{0.05 \pm 0.02}$ & $\mathbf{0.04 \pm 0.01}$ \\
        \hline
      \end{tabular}
      }
    
    \caption{A comparison of mean negative log likelihood (MNLL) and mean squared error (MSE) performance results for regular time grids in the FHN dynamic modeling problem of Equation \ref{eq.fnh}.}
    \label{table2}
    
\end{table}

\begin{table*}[t]
\centering

\resizebox{\textwidth}{!}{
\begin{tabular}{|l|r|r|r|r|r|r|}
\hline
\textbf{Method} & \textbf{Subject 09short} & \textbf{Subject 09long} & \textbf{Subject 35short} & \textbf{Subject 35long} & \textbf{Subject 39short} & \textbf{Subject 39long} \\ \hline
Bayesian NeuralODE  & 2.03 $\pm$ 0.10 & 1.50 $\pm$ 0.05 & 1.42 $\pm$ 0.05 & 1.37 $\pm$ 0.06 & 1.61 $\pm$ 0.07 & 1.45 $\pm$ 0.03 \\ \hline
npODE & 2.09 $\pm$ 0.01 & 1.78 $\pm$ 0.08 & 1.67 $\pm$ 0.02 & 1.66 $\pm$ 0.04 & 2.06 $\pm$ 0.05 & 1.78 $\pm$ 0.04 \\ \hline
GP-ODE & 1.19 $\pm$ 0.02 & 1.14 $\pm$ 0.02 & 1.25 $\pm$ 0.06 & 1.08 $\pm$ 0.04 & 1.25 $\pm$ 0.01 & 1.36 $\pm$ 0.02 \\ \hline
ODE2VAE & 1.17 $\pm$ 0.02 & 1.12$\pm$ 0.02 & 1.21 $\pm$ 0.05 & 1.05 $\pm$ 0.04 & 1.23 $\pm$ 0.01 & 1.31 $\pm$ 0.02 \\ \hline
Latent SDE & 1.05 $\pm$ 0.02 & 1.00 $\pm$ 0.02 & 0.95 $\pm$ 0.05& 0.86 $\pm$ 0.04 & 1.08 $\pm$ 0.01 & 1.18 $\pm$ 0.02 \\ \hline
\textbf{GP-DNF} (ours) & \textbf{0.98 $\pm$ 0.02} & \textbf{0.96 $\pm$ 0.02} & \textbf{0.82 $\pm$ 0.04} & \textbf{0.79 $\pm$ 0.03} & \textbf{1.02 $\pm$ 0.01} & \textbf{1.10 $\pm$ 0.02} \\ \hline

\end{tabular}}
\caption{Mean negative log likelihood for long sequence and short sequence predictions on the held out test set.}\label{table3}

\end{table*}
\begin{table*}[t]
\centering

\resizebox{\textwidth}{!}{
\begin{tabular}{|l|r|r|r|r|r|r|}
\hline
\textbf{Method} & \textbf{Subject 09short} & \textbf{Subject 09long} & \textbf{Subject 35short} & \textbf{Subject 35long} & \textbf{Subject 39short} & \textbf{Subject 39long} \\ \hline
Bayesian NeuralODE & $25.50 \pm 1.70$ & $21.32 \pm 2.58$ & $23.09 \pm 3.95$ & $20.86 \pm 2.95$ & $53.34 \pm 5.31$ & $39.66 \pm 6.82$ \\ \hline
npODE & $27.53 \pm 2.87$ & $33.83 \pm 2.46$ & $36.50 \pm 3.86$ & $23.54 \pm 0.56$ & $115.38 \pm 10.96$ & $53.51 \pm 2.98$ \\ \hline
GP-ODE & $9.11 \pm 0.37$ & $8.38 \pm 1.23$ & $10.11 \pm 0.79$ & $11.66 \pm 0.73$ & $26.72 \pm 0.63$ & $21.17 \pm 2.88$ \\ \hline
ODE2VAE & 9.05 $\pm$ 0.32 & 8.14 $\pm$ 1.05 & 9.25 $\pm$ 0.96 & 10.08 $\pm$ 1.07 & 25.25 $\pm$ 0.61 & 21.06 $\pm$ 2.14 \\ \hline
Latent SDE & 7.46 $\pm$ 0.30 & 6.45 $\pm$ 0.54 & 7.57 $\pm$ 0.46 & 7.65 $\pm$ 0.55 & 21.25 $\pm$ 0.32 & 18.72 $\pm$ 0.97 \\ \hline

\textbf{GP-DNF} (ours) & $\mathbf{7.03 \pm 0.24}$ & $\mathbf{6.04 \pm 0.45}$ & $\mathbf{6.72 \pm 0.22}$ & $\mathbf{7.03 \pm 0.23}$ & $\mathbf{19.43 \pm 0.26}$ & $\mathbf{16.21 \pm 0.73}$ \\ \hline

\end{tabular}}
\caption{ Mean squared error for the same long sequence and short sequence predictions on the held out test set.}\label{table4}

\end{table*}

From Table \ref{table1}, it can be observed that our proposed method exhibits lower MSE and MNLL values compared to the other five methods. Figure \ref{fig1}  shows the  vector field learned from the proposed method and sampled trajectories from the posterior of this vector field. These illustrative synthetic experimental results emphasize the effectiveness of double normalizing flows for time series prediction.

\subsection{Simulated dynamical systems with missing data}

We continue to evaluate the impact of double normalizing flows on the performance of time series recovery using the FitzHugh-Nagumo (FHN) oscillator \citep{aqil2012synchronization} defined by the differential equations
\begin{eqnarray}\label{eq.fnh}
\dot{x}_{1}&=&3\left(x_{1}-x_{1}^{3} / 3+x_{2}\right), \\
\dot{x}_{2}&=&\frac{1}{3}\left(0.2-3 x_{1}-0.2 x_{2}\right).\nonumber
\end{eqnarray}
We first  generate a training sequence with 25 regularly-sampled time points from $t \in \left[0, 5.0\right]$ and add Gaussian noise with $\sigma^2 = 0.025$. Then, we remove all observations in the quadrant where $\boldsymbol{x}_1 > 0$ and $\boldsymbol{x}_2 < 0$, and assess the accuracy of our model in this region. Figure \ref{fig2} shows the learned vector field and sampled trajectories from the posterior of this vector field. Based on the quantitative results in Table \ref{table2}, our double normalizing flow model again demonstrates improvement in performance compared to the previous methods on the missing data task.

\subsection{Complex trajectories of real-world datasets}
As a real world problem, we apply our double normalizing flow method to learn the dynamics of human motion using noisy experimental data from the CMU MoCap database for subjects 09, 35, and 39.\footnote{http://mocap.cs.cmu.edu/} The dataset comprises 50 sensor readings from various body parts during walking or running. We preprocess the data by centering it and splitting it into train, test, and validation sequences. To reduce dimensionality, we use Principal Component Analysis (PCA) to project the 50-dimensional data into a 5D latent space for dynamical learning. We evaluate the model's performance on both long and short sequences and employ the multiple shooting optimization framework \citep{osborne1969shooting,bock1984multiple,bock1983recent,biegler1986nonlinear}  recently proposed for probabilistic GP-ODEs by \citet{hegde2022variational} with the number of shooting segments matching the observation segments in the dataset.

In Figures \ref{3dtraj} and \ref{fig3}, we show the sampling of inducing points and particle trajectories in the latent space using a 3D model of the latent space. To compute the data likelihood, we project the latent dynamics back to the original data space by inverting PCA. We also evaluate the predictive performance on unseen test sequences  of our approach against other state-of-the-art methods on MNLL and MSE metrics in Tables \ref{table3} and \ref{table4}, respectively. The results demonstrate that our approach can achieve better accuracy on complex datasets.

\begin{figure}[h!]
    \centering
    \includegraphics[width=\columnwidth]{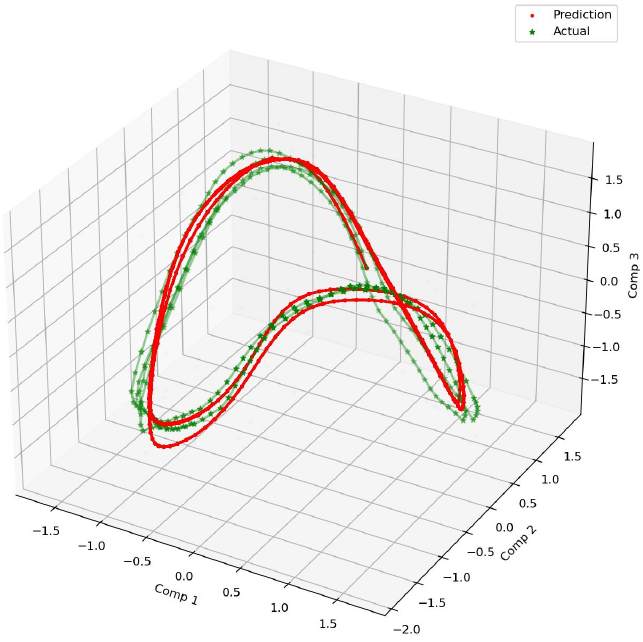}
    \caption{A 3D representation of the learned vector field in the PCA latent space for the mcap39 data set using our GP-DNF model. The red line denotes the predicted values, while the green line indicates the actual values. In Figure \ref{fig3} we show the sampled inducing variable vector fields learned along 2D slices of this image.}
    \label{3dtraj}
\end{figure}

\begin{figure*}[ht!]
    \centering
    \includegraphics[width=\textwidth]{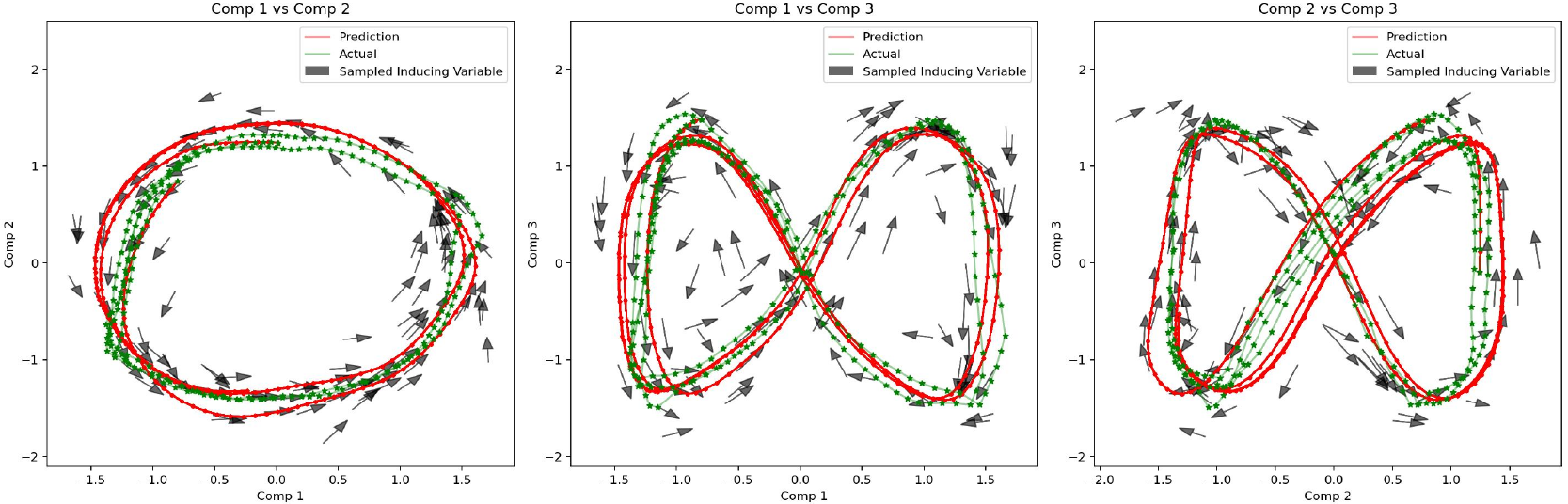}
    \caption{Three 2D projection plots from Figure \ref{3dtraj} showing vector trajectories. The size and direction of the arrows represent the projections of the sampled inducing variables. The results indicate that the model exhibits strong performance and robust generalization capabilities in time series prediction tasks.}
    \label{fig3}
\end{figure*}

\begin{figure*}[ht!] 
  \centering       
    \includegraphics[trim={4mm  0 10mm 0},clip,width=1.0\textwidth,height=0.35\linewidth]{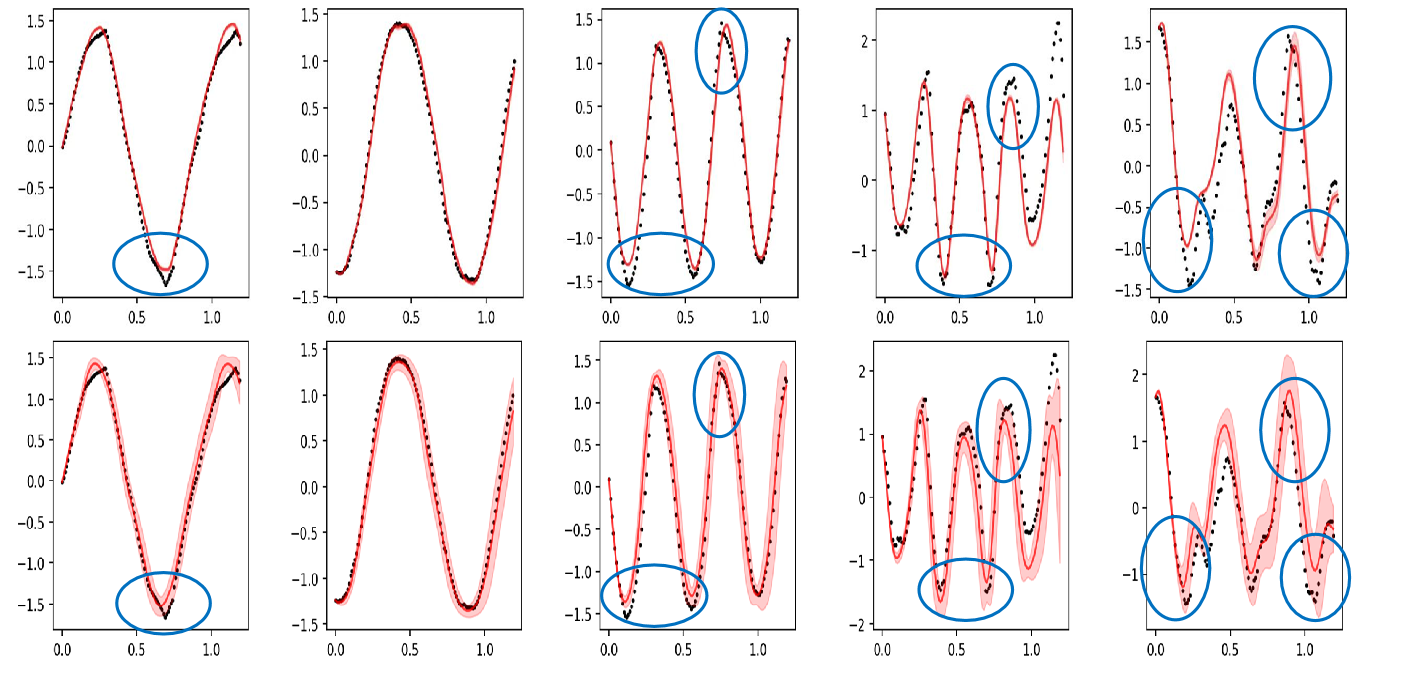} 
  \caption{The top row shows the results of GP-ODE, while the bottom row shows the results of our GP-DNF method. The black dashed line represents the true testing dataset, with time on the x-axis and data values on the y-axis. The deep red solid line represents the predicted mean value, and the light red shaded area represents the 95\% credible interval used to assess uncertainty. We have also highlighted areas of difference between the two methods with blue circles. Our method has a coverage probability of 0.81, while GP-ODE has a coverage probability of 0.60.} 
  \label{fig4}  
\end{figure*}

In addition, to demonstrate the importance of using double normalization flow for our performance, we also report the coverage probability. The coverage probability measures the proportion of true values within the 95\% confidence interval, thereby evaluating the uncertainty estimation in our time series prediction task on the MoCap09 test set. By visualizing the coverage probability in Figure \ref{fig4}, we  show prediction results indicating where our GP-DNF method outperforms GP-ODE, circled in blue. This indicates that GP-DNF is capable of providing better uncertainty estimation in addition to better MSE and MNLL.

 \begin{table}[hb!]
    \centering
    \resizebox{1\width}{!}{
        \begin{tabular}{|l|c|c|c|}
            \hline
            Dataset & MoCap09 & MoCap35 & MoCap39 \\
            \hline
            \rule{0pt}{0.5ex} 
            GP-ODE & $170s$ & $412s$ & $159s$ \\
            \hline
            \rule{0pt}{0.5ex} 
            GP-DNF (ours) & $210s$ & $435s$ & $187s$ \\
            \hline
        \end{tabular}
    }
    \caption{Comparison of the time required in seconds to learn GP-ODE and GP-DNF after 500 iterations.}
    \label{table5}
\end{table}

\subsection{Runtime analysis}
 Although our GP-DNF method demonstrates an ability to improve the modeling and posterior inference compared to the previous GP-ODE approach, the introduction of new parameters for learning and inference does lead to additional runtime. We provide a comparison between our proposed model and the baseline model on the MoCap dataset. Table \ref{table5} shows the time required to learn GP-DNF and GP-ODE after running 500 iterations using the NVIDIA A100 platform. It can be observed that as the dataset size increases, the primary computational resources are allocated towards handling other components, rather than computing the normalizing flows. As a result, our model demonstrates higher adaptability to changes in dataset size, and is not significantly affected by its increase any more than GP-ODE.

\subsection{Additional details on experimental setup}
For GP-DNF, we used 16 inducing points in the VDP and FHN experiments, and 100 inducing points for the MoCap experiments. This choice was made because the MoCap dataset has more sequence samples, and the selection of inducing points follows a trade-off between accuracy and efficiency. We assume Gaussian observation likelihood, and infer the observation noise
parameter from the training data. All the experiments use squared exponential kernel  along with 256 Fourier basis functions for weight-space  GP sampling methods according to Matheron's Rule \citep{wilson2020efficiently}. Along with the variational parameters, normalizing flows parameters, kernel length scales, signal variance, noise scale, and inducing locations are jointly optimized against the model ELBO while training. In the human motion dynamics experiments, we perform the same preprocessing on the data as in previous works, and made separate predictions for long-term and short-term time series. We use multiple shooting method \citep{hegde2022variational} and considered the number of shooting segments to be the same as the number of observation
segments in the dataset.

We quantify model fit by computing the mean negative log-likelihood (MNLL) of the data points under the predictive distribution and the mean squared error (MSE) with respect to the ground truth. Additionally, we report coverage probability, which measures the proportion of times the true value falls within the 95\% credible interval, to evaluate uncertainty estimation in the human motion dynamics experiments. All experiments are conducted on a single NVIDIA A100 GPU and are repeated 10 times with random initialization. Means and standard errors are reported across multiple runs. While different configurations may influence the results, we find that the model is not highly sensitive to hyperparameter choices. To guide hyperparameter selection, we employed cross-validation and observed consistent performance across various settings.

\section{CONCLUSION}
This paper proposes the use of double normalizing flows (GP-DNF) for the GP ordinary differential equation (GP-ODE) model. The method leverages the flexibility and analytical nature of normalizing flows to mathematically characterize vector fields and particle trajectory behaviors. Our results compare favorably with other methods, but there remain some limitations to this method for future work, such as the selection of hyperparameters and the need to carefully apply learning techniques such as early stopping and regularization to prevent overfitting, as well as numerical stability concerns related to the KL regularization term.

\appendix
\section{APPENDIX}
  \subsection{Prior Normalizing Flows}
  \begin{proposition}
  Given a prior normalizing flow, denoted as $G_K(\cdot)$, and a multidimensional Gaussian process $f(\cdot)$ with an input space $\mathcal{X}$, then $G_K \circ f (\cdot)$ is a valid multivariate stochastic process in the same input space.
  \end{proposition}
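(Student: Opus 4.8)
The plan is to invoke Kolmogorov's consistency (extension) theorem, already cited in the statement, which characterizes a valid multivariate stochastic process on the index set $\mathcal{X}$ by a family of finite-dimensional distributions satisfying two conditions: invariance under permutation of the indices, and compatibility under marginalization. The strategy is therefore to construct the candidate finite-dimensional distributions for $G_K \circ f$ and verify that they inherit both conditions from those of the underlying Gaussian process $f$.

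First I would fix an arbitrary finite collection of input points $x_1, \dots, x_n \in \mathcal{X}$ and recall that the finite-dimensional law of $f$ at these points is the multivariate Gaussian $\mathcal{N}(\boldsymbol{0}, [K(x_i, x_j)]_{i,j})$ on $(\mathbb{R}^d)^n$, which already satisfies the Kolmogorov conditions since $f$ is a valid GP. I would then \emph{define} the finite-dimensional law of $G_K \circ f$ at $x_1, \dots, x_n$ to be the pushforward of this Gaussian under the product map $G_K^{\otimes n} : (\mathbb{R}^d)^n \to (\mathbb{R}^d)^n$ that applies $G_K$ separately to each block coordinate. Because each $\boldsymbol{g}_k$, and hence $G_K$, is a smooth (thus Borel measurable) map on $\mathbb{R}^d$, this pushforward is a well-defined probability measure; note that invertibility of $G_K$ is not needed for validity here, being required only later for the density change-of-variables formula.

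The core of the argument is verifying the two consistency conditions, and the point to emphasize is that $G_K$ acts \emph{pointwise} across the index set: the same deterministic map is applied at every input, and the transformation on block $i$ depends only on block $i$. For permutation consistency, a permutation of the indices commutes with $G_K^{\otimes n}$, so pushing forward the permutation-consistent Gaussian family yields a permutation-consistent family for $G_K \circ f$. For marginalization consistency, the crucial observation is that the coordinate projection $(\mathbb{R}^d)^n \to (\mathbb{R}^d)^m$ commutes with $G_K^{\otimes n}$, precisely because the latter is block-diagonal; hence marginalizing the pushforward law over the dropped coordinates equals the pushforward of the marginalized Gaussian, which by the GP's own consistency is exactly the finite-dimensional law assigned to $G_K \circ f$ at the retained points.

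Having checked both conditions, I would conclude by Kolmogorov's extension theorem that there exists a stochastic process on $\mathcal{X}$ with precisely these finite-dimensional distributions, namely $G_K \circ f$, establishing that it is a valid multivariate stochastic process in the same input space. I expect the only genuinely delicate step to be the commutation of marginalization with the pushforward: this is where one must use that the flow is applied independently at each input point rather than jointly across points, and it is the reason a single shared $G_K$, as opposed to an index-dependent transformation, preserves process validity.
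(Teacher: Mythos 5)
Your proposal is correct and follows essentially the same route as the paper's proof: both define the finite-dimensional laws of $G_K \circ f$ as pushforwards of the Gaussian finite-dimensional distributions and appeal to Kolmogorov's consistency theorem, with the coordinate-wise (block-diagonal) action of $G_K$ being the key reason the consistency conditions are inherited. Your version simply spells out the permutation and marginalization checks that the paper leaves implicit, and correctly observes in passing that measurability of $G_K$ suffices here, with invertibility only needed later for the density formula.
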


  \begin{proof}
  To establish the validity of $G_K \circ f (\cdot)$ as a multivariate stochastic process, we first note that $G_K(\cdot)$ is both invertible and differentiable, ensuring the well-definedness of the push-forward measure. Then, the collection of finite-dimensional distributions $\left\{G_K\circ f (x_1),G_K\circ f (x_2),...,G_K\circ f (x_T)\right\}$, where $T \in \mathbb{N}$, from $G_K\circ f (x)$, satisfy the consistency conditions of the well-known Kolmogorov's consistency theorem. This is due to the coordinate-wise mapping nature of $G_K(\cdot)$, which completes the proof.
  \end{proof}

 \subsection{Derivation of Variational Lower Bound}
 \begin{proposition}
 We denote $G_K(\cdot)=\boldsymbol{g}_K\circ ...\boldsymbol{g}_2\circ \boldsymbol{g}_1 (\cdot)$, $
 \tilde{f}\left( \cdot \right) =G_K\circ f\left( \cdot \right)
 $, and for the inducing variables, we adopt the same operation, i.e. $
 \tilde{\boldsymbol{u}}\left( \cdot \right) =G_K\circ \boldsymbol{u}\left( \cdot \right)
 $. The transformed prior after introducing the sparse inducing points is given by:
 \begin{equation}
    p\left( \tilde{f},\boldsymbol{\tilde{u}} \right) =p\left( f|\boldsymbol{u} \right) J_f\cdot p\left( \boldsymbol{u} \right) J_{\boldsymbol{u}}
\end{equation}
 where $
 J_f=\prod_k{\left| \frac{\partial \boldsymbol{g}_{k+1}}{\partial G_k\circ f} \right|^{-1}}
 $ and $J_{\boldsymbol{u}}=\prod_k{\left|  \frac{\partial \boldsymbol{g}_{k+1}}{\partial G_k\circ \boldsymbol{u}} \right|^{-1}}$
 \end{proposition}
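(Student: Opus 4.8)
The plan is to treat the statement as a finite-dimensional change-of-variables computation on the joint prior density. In the augmented model the field values $f$ (evaluated at the relevant states) and the inducing variables $U$ form a jointly Gaussian vector with density $p(f,U)=p(f|U)\,p(U)$, the two factors being exactly the SVGP conditional and the inducing-point marginal introduced earlier. The transformed quantities arise from the map $T:(f,U)\mapsto(\tilde f,\tilde U)=\bigl(G_K(f),G_K(U)\bigr)$, and the goal is to express the pushforward density $p(\tilde f,\tilde U)$ through the original factors and the flow Jacobians. Since the preceding proposition already guarantees that each $\boldsymbol{g}_k$, and hence $G_K$, is invertible and differentiable, $T$ is a diffeomorphism and the classical change-of-variables formula applies: $p(\tilde f,\tilde U)=p(f,U)\,\bigl|\det \partial T/\partial(f,U)\bigr|^{-1}$, evaluated at $f=G_K^{-1}(\tilde f)$ and $U=G_K^{-1}(\tilde U)$.

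First I would exhibit the block structure of the Jacobian of $T$. Because $\tilde f=G_K(f)$ depends only on $f$ and $\tilde U=G_K(U)$ only on $U$, the matrix $\partial T/\partial(f,U)$ is block diagonal, so its determinant factorizes as $\det(\partial \tilde f/\partial f)\cdot\det(\partial \tilde U/\partial U)$. Moreover, because $G_K$ acts coordinate-wise across evaluation points (the coordinate-wise mapping property already invoked to prove validity of $G_K\circ f$), each of these two blocks is itself block diagonal across points, which is what lets the two factors be attributed cleanly and separately to $f$ and to $U$.

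Next I would unfold the composition $G_K=\boldsymbol{g}_K\circ\cdots\circ\boldsymbol{g}_1$ by the chain rule. Writing $G_0=\mathrm{id}$, one has $\partial G_K(f)/\partial f=\prod_{k=1}^K \partial \boldsymbol{g}_k/\partial(G_{k-1}\circ f)$, so that $\bigl|\det \partial \tilde f/\partial f\bigr|^{-1}=\prod_{k=1}^K \bigl|\det \partial \boldsymbol{g}_k/\partial(G_{k-1}\circ f)\bigr|^{-1}=J_f$, and identically $\bigl|\det \partial \tilde U/\partial U\bigr|^{-1}=J_U$. Substituting these together with $p(f,U)=p(f|U)\,p(U)$ into the change-of-variables identity yields $p(\tilde f,\tilde U)=p(f|U)\,J_f\cdot p(U)\,J_U$, which is exactly the claim.

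The main obstacle I anticipate is not the algebra but making the bookkeeping of the block-diagonal Jacobian rigorous: one must state explicitly that $T$ acts as a product map on the stacked vector of per-point evaluations, so that the joint determinant genuinely splits into an $f$-part and a $U$-part, and that the conditional factor $p(f|U)$ carries through the transformation without generating cross terms. This rests precisely on the coordinate-wise action of $G_K$ established earlier; once that point is pinned down, the chain-rule product and the determinant factorization are routine.
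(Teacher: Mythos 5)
Your proposal is correct and follows essentially the same route as the paper's own proof: a finite-dimensional change of variables on the joint density $p(f,U)=p(f|U)\,p(U)$, with the Jacobian factorizing into separate $f$- and $U$-blocks because $G_K$ acts coordinate-wise, and the per-layer product $J_f$, $J_U$ arising from the chain rule applied to the composition $\boldsymbol{g}_K\circ\cdots\circ\boldsymbol{g}_1$. Your write-up is in fact somewhat more explicit than the paper's about why the determinant splits and where the inverse map is evaluated, but the underlying argument is identical.
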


\begin{proof}
    According to the variable transformation formula, we can obtain:
\begin{equation}
\begin{aligned}
p(\tilde{f},\tilde{\boldsymbol{u}})&=p(\boldsymbol{f},\boldsymbol{u})\prod_{j=0}^{J-1}\left|\left( \begin{matrix}
	\frac{\partial \boldsymbol{g}_k}{\partial G_{k-1}\circ f}&		     0\\
	     0&		\frac{\partial \boldsymbol{g}_k}{\partial G_{k-1}\circ {\boldsymbol{u}}}\\
\end{matrix} 
\right)\right|^{-1}\\
&=p\left( \boldsymbol{f}|{\boldsymbol{u}} \right) J_f\cdot p\left( {\boldsymbol{u}} \right) J_{\boldsymbol{u}}
\end{aligned}
\end{equation}
  The Jacobian matrix has a block-diagonal structure due to the fact that we assume $G_K(\cdot)$ is a coordinate-wise map.
\end{proof}
  Thus, we can obtain a joint model probability density function:
\begin{equation}
\begin{aligned}
    &p(\boldsymbol{\tilde{f}},\tilde{{\boldsymbol{u}}},\boldsymbol{x}(0),Y)=p( Y|\boldsymbol{\tilde{f}},\boldsymbol{x}( 0 ) ) p( \boldsymbol{\tilde{f}},\tilde{{\boldsymbol{u}}} ) p( \boldsymbol{x}\left( 0 \right) ) 
\\
&=p( Y|\boldsymbol{\tilde{f}},\boldsymbol{x}\left( 0 \right) ) p( \boldsymbol{f}|{\boldsymbol{u}} ) J_f\cdot p( {\boldsymbol{u}} ) J_{\boldsymbol{u}}\cdot p( \boldsymbol{x}\left( 0 \right) ) 
\end{aligned}
\end{equation}
Similarly, using the mean field approximation, we can derive the joint probability density function of the variational distribution,
\begin{equation}
    q(\boldsymbol{x}\left( 0 \right) ,\boldsymbol{\tilde{f}},\tilde{U})=p\left( f|{\boldsymbol{u}} \right) J_f\cdot q\left( {\boldsymbol{u}} \right) J_{\boldsymbol{u}}\cdot q\left( \boldsymbol{x}\left( 0 \right) \right) 
\end{equation}
By doing so, we obtained a explicit expression for the variational lower bound.
\begin{equation}
\begin{aligned}
    &\log p(Y\mid \boldsymbol{\theta })\ge \mathrm{ELBO}
=\mathbb{E} _{q(x\left( 0 \right) ,\boldsymbol{\tilde{f}},\tilde{{\boldsymbol{u}}})}\left[ \log \frac{p(\boldsymbol{\tilde{f}},\tilde{{\boldsymbol{u}}},X,Y)}{q(\boldsymbol{x}\left( 0 \right) ,\boldsymbol{\tilde{f}},\tilde{{\boldsymbol{u}}})} \right] 
\\
&=\mathbb{E} _{q(x\left( 0 \right) ,\boldsymbol{\tilde{f}})}\left[ \log p\left( Y|\boldsymbol{\tilde{f}},\boldsymbol{x}\left( 0 \right) \right) \right] 
\\
&-\mathbb{E} _{q(x\left( 0 \right) ,\boldsymbol{\tilde{f}},\tilde{{\boldsymbol{u}}})}\left[ \log \frac{{p\left( f|{\boldsymbol{u}} \right) J_f}\cdot q\left( {\boldsymbol{u}} \right) {J_{\boldsymbol{u}}}\cdot q\left( \boldsymbol{x}\left( 0 \right) \right)}{{p\left( f|{\boldsymbol{u}} \right) J_f}\cdot p\left( {\boldsymbol{u}} \right) {J_{\boldsymbol{u}}}\cdot p\left( \boldsymbol{x}\left( 0 \right) \right)} \right] 
\\&
=\mathbb{E} _{q(\boldsymbol{x}\left( 0 \right) ,\boldsymbol{f})}\left[ \log p\left( Y|G_K\circ \boldsymbol{f},\boldsymbol{x}\left( 0 \right) \right) \right] \\&-\mathrm{KL}\left( q\left( \boldsymbol{x}\left( 0 \right) \right) |p\left( \boldsymbol{x}\left( 0 \right) \right) \right) -\mathrm{KL}\left( q\left( {\boldsymbol{u}} \right) |p\left( {\boldsymbol{u}} \right) \right) \end{aligned}
\end{equation}

The last equation is formally known as probability under change of measure \citep{rezende2015variational}.



\section*{Acknowledgments}
The work is supported by the Fundamental Research Program of Guangdong, China, under Grant 2023A1515011281, and in part by the National Natural Science Foundation of China under Grant 61571005.

\bibliographystyle{apalike}
\bibliography{ijcai24}

 \section*{Checklist}

 \begin{enumerate}

  \item For all models and algorithms presented, check if you include:
   \begin{enumerate}
     \item A clear description of the mathematical setting, assumptions, algorithm, and/or model. [Yes]
    \item An analysis of the properties and complexity (time, space, sample size) of any algorithm. [Yes]
     \item (Optional) Anonymized source code, with specification of all dependencies, including external libraries. [Yes]
   \end{enumerate}

   \item For any theoretical claim, check if you include:
   \begin{enumerate}
     \item Statements of the full set of assumptions of all theoretical results. [Yes]
     \item Complete proofs of all theoretical results. [Yes]
     \item Clear explanations of any assumptions. [Yes]     
   \end{enumerate}

   \item For all figures and tables that present empirical results, check if you include:
   \begin{enumerate}
     \item The code, data, and instructions needed to reproduce the main experimental results (either in the supplemental material or as a URL). [Yes]
     \item All the training details (e.g., data splits, hyperparameters, how they were chosen). [Yes]
           \item A clear definition of the specific measure or statistics and error bars (e.g., with respect to the random seed after running experiments multiple times). [Yes]
           \item A description of the computing infrastructure used. (e.g., type of GPUs, internal cluster, or cloud provider). [Yes]
   \end{enumerate}

   \item If you are using existing assets (e.g., code, data, models) or curating/releasing new assets, check if you include:
   \begin{enumerate}
     \item Citations of the creator If your work uses existing assets. [Yes]
     \item The license information of the assets, if applicable. [Yes]
    \item New assets either in the supplemental material or as a URL, if applicable. [Yes]
    \item Information about consent from data providers/curators. [Yes]
     \item Discussion of sensible content if applicable, e.g., personally identifiable information or offensive content. [Not Applicable]
   \end{enumerate}

   \item If you used crowdsourcing or conducted research with human subjects, check if you include:
   \begin{enumerate}
     \item The full text of instructions given to participants and screenshots. [Not Applicable]
     \item Descriptions of potential participant risks, with links to Institutional Review Board (IRB) approvals if applicable. [Not Applicable]
     \item The estimated hourly wage paid to participants and the total amount spent on participant compensation. [Not Applicable]
   \end{enumerate}

   \end{enumerate}

\end{document}